\pdfoutput=1

\documentclass[11pt]{article}

\usepackage{emnlp2023}

\usepackage{times}
\usepackage{latexsym}
\usepackage[htt]{hyphenat}
\usepackage{amssymb}
\usepackage{amsmath}
\usepackage{booktabs}
\usepackage{multirow}
\usepackage{graphicx}
\usepackage{enumitem}
\usepackage{algorithm}
\usepackage{algpseudocode}
\usepackage{amsthm}
\newtheorem{theorem}{Theorem}
\newtheorem{lemma}{Lemma}
\usepackage[T1]{fontenc}

\usepackage[utf8]{inputenc}

\usepackage{microtype}

\usepackage{inconsolata}

\title{Multi-Expert Conformal Risk Control for Pairwise LLM Judging \\ in Open-Ended Dialogue}

\author{
  Ming Cheng \quad
  Yusheng Dai \quad
  Qiuhong Ke$^{\dagger}$ \quad
  Zhaolin Chen \quad
  Lizhen Qu$^{\dagger}$ \\
  Monash University \\
  \texttt{\{ming.cheng, yusheng.dai, qiuhong.ke, zhaolin.chen, lizhen.qu\}@monash.edu}
}

\begin{document}
\maketitle
\renewcommand{\thefootnote}{\fnsymbol{footnote}}
\footnotetext[2]{Corresponding Authors.}
\renewcommand{\thefootnote}{\arabic{footnote}}
\setcounter{footnote}{0}
\begin{abstract}



In this paper, we explore multi-expert Conformal Risk Control (CRC) algorithms for pairwise LLM-as-a-Judge evaluation in open-ended dialogue. Our core insight is that multi-expert aggregation offers a complementary remedy to CRC: whereas CRC controls risk at the decision threshold through abstention, aggregation sanitizes the scoring function at its source. Guided by this, we first design two multi-expert CRC methods: \textit{Score Averaging} and \textit{Decision Voting}, which aggregate at the score and decision levels, respectively. While both strategies outperform single-expert methods on homogeneous expert panels, on heterogeneous LLM judges they remain risk-valid but recover only limited coverage, because a uniform threshold cannot match the experts' distinct scoring scales. To resolve this issue, we further propose \textit{Marginal-Calibrated Conformal Consensus} (MC$^3$): it captures distinct per-expert scales via initial threshold ratios, while jointly tuning a unified decision function~$C_t(x)$ applied identically in both calibration and test, thereby preserving exchangeability. To evaluate our framework, we construct \textit{Panel}, a 1{,}800-pair human pairwise-preference benchmark for open-ended dialogue. It is built on responses generated by four open-weight LLMs over dialogue contexts from three domains (ESConv, MSC, DREAM), with full logit access. In experiments, we find that both Score Averaging and Decision Voting substantially improve accuracy and acceptance rate on homogeneous panels. Notably, MC$^3$ extends these gains to heterogeneous panels by accommodating distinct per-expert scoring scales across all three datasets.

\end{abstract}


\section{Introduction}
\label{sec:intro}

\begin{figure*}[t]
\centering
\includegraphics[width=0.95\textwidth]{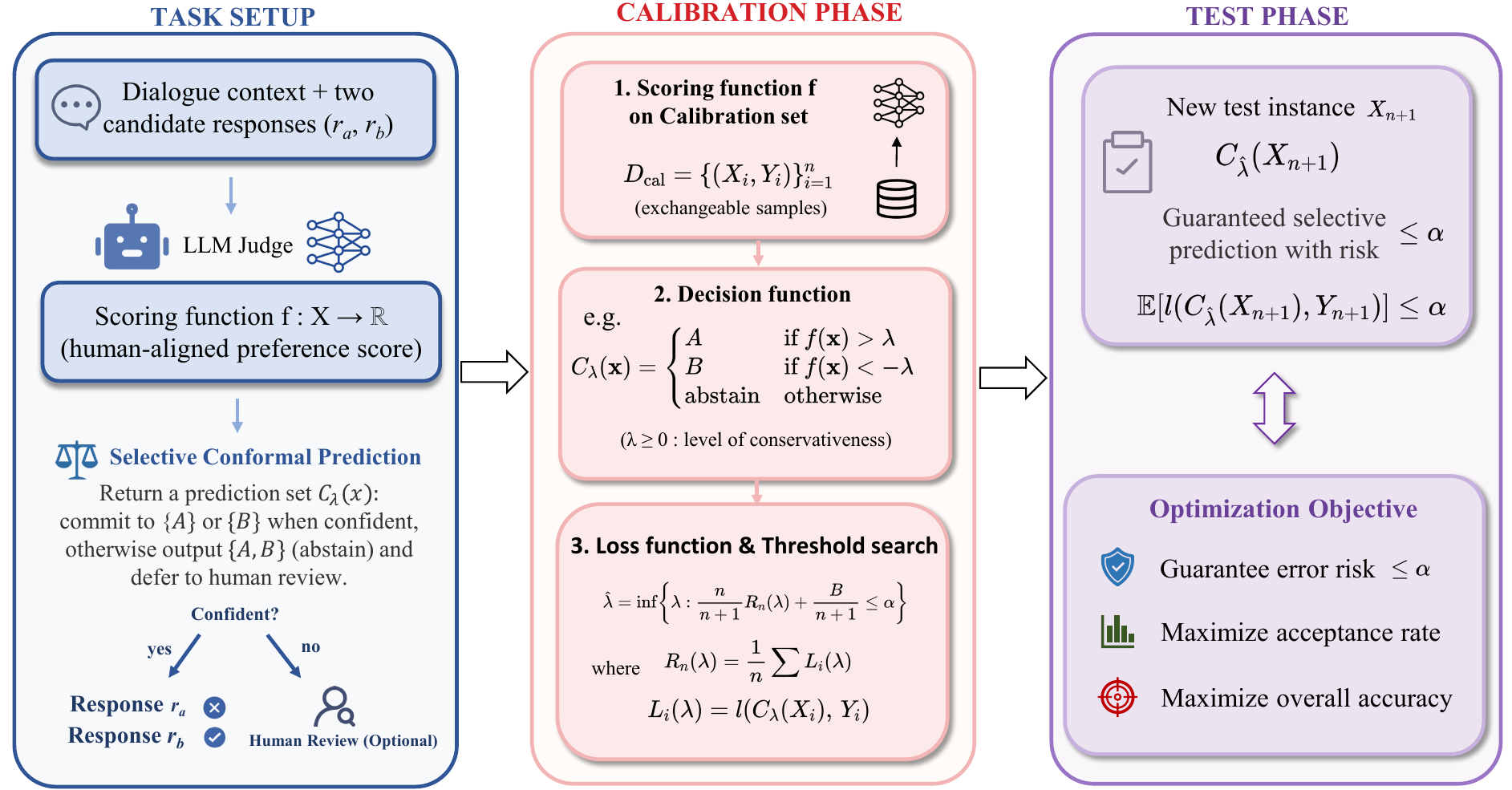}
\caption{Overview of the selective prediction pipeline under Conformal Risk Control (CRC). \textbf{Left}: the task setup defines evaluation instances and the calibration set. \textbf{Middle}: the calibration phase constructs a scoring function~$f$ and a parameterized decision function~$C_\lambda$, then searches for the smallest threshold~$\hat{\lambda}$ that keeps the empirical risk below~$\alpha$. \textbf{Right}: the test phase deploys~$C_{\hat{\lambda}}$ on new instances with a distribution-free guarantee, while the optimization objective is to maximize acceptance rate and accuracy under the risk constraint.}
\label{fig:overview}
\end{figure*}

LLM-as-a-judge has become the standard for evaluating open-ended dialogue, replacing costly human preference labels with model-based pairwise comparisons~\cite{zheng2023judging,kim2024prometheus}. Yet, in high-stakes domains such as mental-health support, relying on automated judges carries significant risk: even minor misjudgments, driven by overconfidence or systematic biases, can have severe consequences~\cite{wang2024fair,koo2024cobbler}. Consequently, the safe and reliable use of these LLM evaluators in such settings demands explicit mechanisms for strict risk control.

To operationalize this risk control, we formulate LLM-based pairwise evaluation as a \textit{selective prediction} problem~\cite{jung2024trust,chen2023adaptation} under the Conformal Risk Control (CRC) framework~\cite{crc} (detailed in Section~\ref{sec:formulation}). Instead of forcing a verdict, the LLM judge commits to a preference only when sufficiently confident; otherwise, it abstains and returns the full candidate set (optionally deferring the pair to human review). By calibrating the abstention threshold $\lambda$ on the calibration set, CRC provides a formal guarantee that the error risk on the unseen test set is bounded by a user-specified level~$\alpha$~\cite{crc}. Such a guarantee is finite-sample and distribution-free, requiring only two conditions: the exchangeability of the calibration and test pairs and the non-increasing monotonicity of the loss function in $\lambda$. Fundamentally, this selective paradigm achieves risk control by trading coverage for accuracy, abstaining whenever the judge is unsure. Consequently, preserving the practical utility of this approach requires a discriminative, human-aligned scoring function to maximize evaluation coverage while strictly adhering to the specified risk bound.

However, prior work on CRC for LLM evaluation commonly relies on a single-expert paradigm, which is empirically miscalibrated for pairwise comparisons~\cite{jung2024trust,scope}. While advanced models partially mitigate shallow artifacts such as position bias~\cite{zheng2023judging} and self-enhancement~\cite{panickssery2024llm} (Table~\ref{tab:main_results}), deeper model-specific biases persist as a fundamental bottleneck~\cite{koo2024cobbler,dorner2024limits}: intrinsic preferences shaped by training data, prompt sensitivities, and domain-specific competence gaps remain embedded in any individual evaluator~\cite{feuer2025style,stureborg2024inconsistent}. Lacking explicit ground truth, open-ended dialogues severely amplify these idiosyncrasies, exposing a structural limitation unsolvable by any solitary judge. Judge ensembles offer a natural solution~\cite{verga2024poll}: whether implemented through multi-agent debate or multi-model panels, they build a robust consensus that yields more discriminative and human-aligned evaluation~\cite{chan2024chateval,autoarena2025,verga2024poll,zhao2025lmc,qian2026jury}. Yet how multi-expert mechanisms can be integrated into CRC, and how much coverage they could recover at a fixed risk level, therefore remain largely unexplored.

In this paper, we present the first formal multi-expert CRC framework for pairwise LLM-as-a-Judge evaluation in open-ended dialogue settings. Our framework is built on the core insight that multi-expert aggregation offers a complementary remedy to CRC: whereas CRC controls risk at the decision threshold through abstention, multi-expert aggregation sanitizes the scoring function at its source. Guided by this, we design two CRC-adapted multi-expert strategies: Score Averaging, which aggregates at the scoring function~$f$, and Decision Voting, which aggregates at the decision function~$C_\lambda$. We study these strategies on two kinds of expert panels: \textit{homogeneous} (the same model under varied prompts or random seeds) and \textit{heterogeneous} (different models evaluating independently). While both clearly improve over single-expert approaches in the homogeneous setting, in the heterogeneous setting they remain risk-valid yet recover less coverage, because a single shared threshold cannot match the experts' distinct scoring scales and silences the narrow-range ones (\S\ref{sec:avg_vote}).

To further address the heterogeneous setting, we propose \textit{Marginal-Calibrated Conformal Consensus} (MC$^3$). MC$^3$ uses per-expert independent calibration as initialization to estimate the ratio between experts' thresholds, then performs joint threshold searching over a single global scalar~$t$ that scales all per-expert thresholds proportionally as $\lambda_j(t) = t \cdot \lambda_j^{(0)}$. By defining a unified joint vote decision function~$C_t(x)$ used identically in both calibration and test phases, MC$^3$ preserves exchangeability and the finite-sample, distribution-free risk guarantee of CRC (Appendix~\ref{sec:appendix_proofs}). Empirically, by retaining the multi-expert ensemble while explicitly accounting for the heterogeneity across experts, MC$^3$ attains the highest acceptance rate among CRC-compliant methods on all three datasets, at accuracy comparable to Decision Voting and while preserving the formal risk guarantee throughout.

Existing dialogue benchmarks either annotate within-model preferences (e.g., ESC-Pro~\cite{Zhao2025}, EmPO~\cite{Sotolar2024}) or rely on closed-source LLMs without logit access (e.g., HEART~\cite{heart2025}, o2mDial~\cite{Lee2025}), leaving white-box CRC evaluation unsupported. To fill this gap, we construct \textsc{Panel}: 1{,}800 exhaustive pairwise comparisons between four open-weight LLMs across three open-ended dialogue domains (ESConv~\cite{esconv}, MSC~\cite{msc}, DREAM~\cite{dream}), with human preference labels under a multi-dimensional rubric and full logit access.

We make three key contributions:
\begin{itemize}[leftmargin=*, itemsep=2pt, topsep=4pt]
    \item Through a systematic empirical study, we identify the optimal conformity score for pairwise LLM-as-a-Judge CRC in the open-ended dialogue settings, filling a gap in prior work.

    \item Based on this, we propose the first formal multi-expert CRC framework, with \textit{Score Averaging} and \textit{Decision Voting} for homogeneous expert settings, and MC$^3$ for heterogeneous expert settings.

    \item We construct \textsc{Panel}, on which our framework substantially improves accuracy and acceptance rate over single-expert baselines while maintaining the target risk level.
\end{itemize}

\begin{figure*}[t]
\centering
\includegraphics[width=1.0\textwidth]{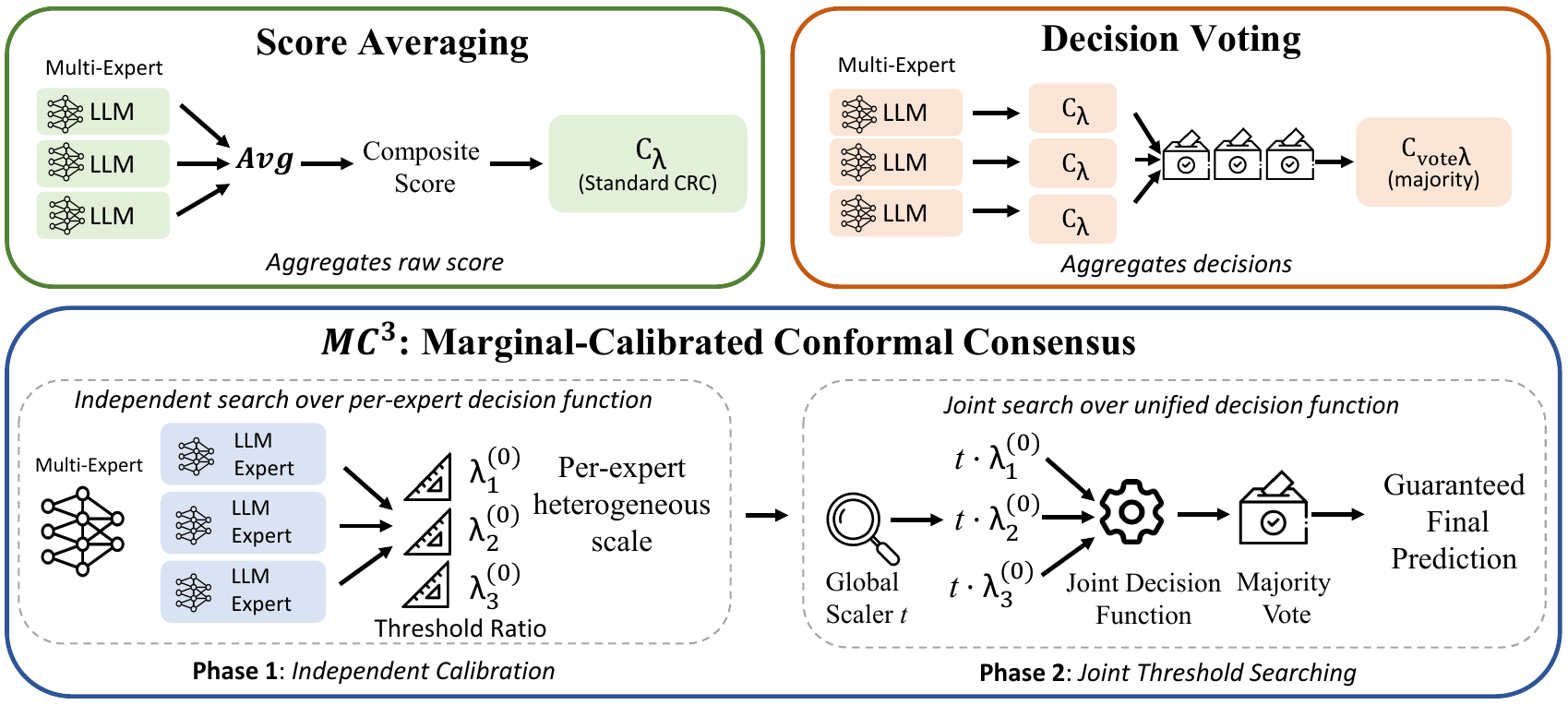}
\caption{Overview of the three multi-expert CRC strategies. \textbf{Score Averaging} (top left) aggregates per-expert scores into a composite $f_{\mathrm{avg}}$. \textbf{Decision Voting} (top right) lets each expert make an independent decision $C_\lambda^{(j)}$ with a shared threshold~$\lambda$, then aggregates via majority vote. \textbf{MC$^3$} (bottom) first initializes per-expert threshold ratios $\lambda_j^{(0)}$ via independent calibration (Phase~1), then scales them proportionally by a global scalar~$t$ and performs joint threshold searching over a unified decision function $C_t(x)$ (Phase~2), simultaneously preserving per-expert heterogeneity and the exchangeability required by CRC.}
\label{fig:method}
\end{figure*}

\section{Problem Formulation}
\label{sec:formulation}

\paragraph{Overview.}
As illustrated in Figure~\ref{fig:overview}, we formulate LLM-based pairwise evaluation as a \textit{selective prediction} problem under the Conformal Risk Control (CRC) framework~\cite{crc}. Let $\mathcal{X}$ denote the space of evaluation instances, where each $x \in \mathcal{X}$ comprises a dialogue context paired with two candidate responses $(r_a, r_b)$, and let $\mathcal{D}_{\mathrm{cal}} = \{(X_i, Y_i)\}_{i=1}^n$ denote an exchangeable calibration set of instances with human preference labels. Given an LLM judge that compares such pairs, our system performs selective prediction by returning a prediction set $C_\lambda(x) \subseteq \mathcal{Y}$. A singleton output, $C_\lambda(x) = \{A\}$ or $\{B\}$, indicates that the judge is confident enough to commit to a preference ($r_a$ or $r_b$, respectively). Conversely, when uncertain, the judge outputs the full set $C_\lambda(x) = \{A, B\}$, effectively \textit{abstaining} and deferring the instance to human evaluation. The CRC pipeline consists of (i) a \textit{calibration phase} that uses $\mathcal{D}_{\mathrm{cal}}$ to search a threshold~$\hat{\lambda}$, and (ii) a \textit{test phase} that deploys~$\hat{\lambda}$ on unseen instances with the formal risk guarantee. Our objective is to maximize both the \textit{accuracy}, which measures the judge's alignment with human preferences over all pairs regardless of acceptance, and the \textit{acceptance rate}, which represents the overall coverage of accepted predictions across all samples.

\paragraph{Scoring Function and Decision Function.}
A scoring function $f\colon \mathcal{X} \to \mathbb{R}$ assigns each instance a real-valued preference score reflecting the confidence of the LLM judge in preferring one response. A threshold parameter $\lambda \geq 0$ then defines the prediction set:
\begin{equation}
    C_\lambda(x) = \begin{cases}
        \{A\} & \text{if } f(x) > \lambda \\
        \{B\} & \text{if } f(x) < -\lambda \\
        \{A, B\} & \text{otherwise \textup{(abstain)}}
    \end{cases}
    \label{eq:decision}
\end{equation}
Here $\lambda$ controls conservativeness: larger $\lambda$ raises the acceptance bar, increasing abstentions at the cost of coverage.

\paragraph{Loss Function and Threshold Searching.}
For each calibration instance $(X_i, Y_i)$, the loss is the \textit{miscoverage} of the prediction set:
\begin{equation}
    L_i(\lambda) = \mathbf{1}\!\left[\,Y_i \notin C_\lambda(X_i)\,\right]
    \label{eq:loss}
\end{equation}
which is non-increasing in~$\lambda$ (Lemmas~\ref{lem:perexpert} and~\ref{lem:lossmono}): larger $\lambda$ accepts fewer instances and thus incurs fewer errors. CRC then selects $\hat{\lambda}$ as:
\begin{equation}
    \hat{\lambda} = \inf\left\{\lambda \,:\, \frac{n}{n{+}1}\, R_n(\lambda) + \frac{B}{n{+}1} \leq \alpha \right\}
    \label{eq:calibration}
\end{equation}
where $R_n(\lambda) = \frac{1}{n}\sum_{i=1}^n L_i(\lambda)$ is the empirical risk and $B$ is a uniform upper bound on the loss ($L_i(\lambda) \leq B$ for every~$\lambda$). The term $B/(n{+}1)$ bounds the worst-case contribution of the single unseen test point to the risk, which is what lets the guarantee hold at any finite~$n$. Since the loss in Eq.~\ref{eq:loss} is the $0/1$ miscoverage indicator, it deterministically takes values in $\{0, 1\}$: $L_i(\lambda) = 1$ only when the judge commits to a single response that disagrees with the human label, whereas an abstention returns $\{A, B\}$ which trivially contains $Y_i$ and hence $L_i = 0$. Consequently, $B = 1$ is the tightest valid upper bound throughout. The $\inf$ yields the smallest qualifying $\lambda$, maximizing acceptance under the risk constraint.

\paragraph{Deployment Guarantee.}
For a new test instance $X_{n+1}$ drawn exchangeably with the calibration data, the calibrated predictor $C_{\hat{\lambda}}$ satisfies:
\begin{equation}
    \mathbb{E}\bigl[\ell(C_{\hat{\lambda}}(X_{n+1}), Y_{n+1})\bigr] \leq \alpha
    \label{eq:guarantee}
\end{equation}
This guarantee is distribution-free, model-agnostic, and finite-sample, requiring only exchangeability and loss monotonicity in $\lambda$.


\section{Multi-Expert Conformal Consensus}
\label{sec:multi_expert}

Surface biases such as position bias and self-enhancement fade as models scale up (Table~\ref{tab:main_results}), but model-specific tendencies, distinct scoring scales, domain preferences, and prompt sensitivities, persist regardless of scale~\cite{koo2024cobbler}. Multi-expert aggregation reduces the variance from these residual biases, which single-model improvements cannot. Yet no prior work has studied multi-expert CRC with formal risk guarantees, a gap we address in this section.


\subsection{Per-Expert Base Score Selection}
\label{sec:base_score}

We first conduct an extensive empirical study of base scoring functions for CRC in open-ended dialogue (see \S\ref{sec:experiments} for full results). Among all candidates, \textit{logit-based pairwise preference probability} emerges as the optimal single-expert conformity score. Let $z_A^{(j)}, z_B^{(j)}$ denote the judge's logits at the first generated token position for tokens \texttt{A}, \texttt{B} on instance~$x$. Each expert's scoring function is a binary softmax over these two logits:
\begin{equation}
    f_j(x) = \frac{\exp(z_A^{(j)})}{\exp(z_A^{(j)}) + \exp(z_B^{(j)})} - 0.5
    \label{eq:base_score}
\end{equation}
Being continuous, this score allows the threshold to be searched at fine granularity, and it satisfies the monotonicity required by CRC. It also achieves the strongest alignment with human preferences among all candidates (Table~\ref{tab:main_results}). We therefore adopt $f_j(x)$ as the default per-expert base score for multi-expert CRC throughout this work.

\subsection{Score Averaging and Decision Voting}
\label{sec:avg_vote}

Within the CRC pipeline of \S\ref{sec:formulation}, multiple experts can be combined at two natural points: the scoring function~$f$ and the decision function~$C_\lambda$. We design one CRC-adapted strategy for each point, with all experts sharing a single threshold~$\lambda$ for simplicity:

\paragraph{Score Averaging} modifies the scoring function~$f$ while leaving the pipeline unchanged. The $K$ per-expert scores are averaged into a composite:
\begin{equation}
    f_{\mathrm{avg}}(x) = \frac{1}{K}\sum_{j=1}^{K} f_j(x)
    \label{eq:avg_score}
\end{equation}
and standard CRC is applied to $f_{\mathrm{avg}}$: decision function $C_\lambda$, loss $L_i(\lambda)$, and threshold searching all remain identical to the single-expert case.

\paragraph{Decision Voting} retains each per-expert scoring function $f_j(x)$ but modifies the decision function~$C_\lambda$. All experts share a unified threshold~$\lambda$; each independently produces a per-expert decision $C_\lambda^{(j)}(x)$ following Eq.~\ref{eq:decision}, and a majority vote aggregates them into a joint decision function:
\begin{equation}
    C_\lambda^{\mathrm{vote}}(x) = \mathrm{MajVote}\!\bigl(C_\lambda^{(1)}(x),\;\ldots,\;C_\lambda^{(K)}(x)\bigr)
    \label{eq:decision_voting}
\end{equation}
The system accepts a prediction when a strict majority of experts agree on a direction, and abstains otherwise.

\paragraph{Adaptation on Homogeneous and Heterogeneous multi-experts.}
When all $K$ experts are instances of the same model under varied prompts or seeds (\textit{homogeneous} setting), every $f_j$ shares the same scoring scale by construction. Score Averaging and Decision Voting therefore preserve the underlying distribution, and a shared~$\lambda$ treats all experts fairly: both strategies maintain the risk guarantee (Appendix~\ref{sec:appendix_avg} and~\ref{sec:appendix_dvote}) and substantially improve accuracy over a single expert (Table~\ref{tab:multi_expert}). When experts come from different models (\textit{heterogeneous} setting), their scoring distributions become incompatible. When experts operate at different scoring scales (e.g., $\lambda \approx 0.05$ for a 12B model vs.\ $\lambda \approx 0.20$ for an 8B model), a single shared threshold treats them asymmetrically: the narrow-scale expert never crosses the threshold and is effectively \textit{silenced}, always casting an abstain vote. The system thereby degrades from a multi-expert ensemble into a single-expert one, since only the wide-scale expert can ever contribute a non-abstaining vote. \textit{Score Averaging} compounds this by first collapsing the incompatible $f_j$ distributions into a composite whose scale matches no individual expert; per-expert calibration (e.g., Platt scaling~\cite{platt1999probabilistic}) could address this but at prohibitive complexity. Neither strategy resolves the underlying issue, so both attain low acceptance rates and fail to fully exploit multi-expert benefits.

\section{\fontsize{10.5pt}{12pt}\selectfont Marginal-Calibrated Conformal Consensus}
\label{sec:mc3}

\subsection{Marginal Calibration Initialization}
\label{sec:init}

A natural approach to this is \textit{Test-time Voting}: assign each expert~$j$ an independent threshold via the per-expert analogue of Eq.~\ref{eq:calibration}:
\begin{equation}
    \lambda_j^{(0)} = \inf\Bigl\{\lambda : R_n^{(j)}(\lambda) \leq \alpha\Bigr\}
    \label{eq:per_expert_lambda}
\end{equation}
and aggregate via majority vote at test time. These thresholds capture the scoring-scale heterogeneity identified in \S\ref{sec:avg_vote}, but $\lambda_j^{(0)}$ cannot serve as the final threshold: during calibration each expert is evaluated in isolation, while at test time a collective vote produces the prediction. This mismatch breaks exchangeability of the loss random variables required by CRC (Appendix~\ref{sec:appendix_remarks}).

Our insight is to use $\lambda_j^{(0)}$ as \textit{ratio initialization}, not as the final threshold. It estimates the approximate \textit{ratio} between per-expert thresholds at the target risk level. \S\ref{sec:joint} uses this ratio as a fixed shape to define a unified decision function~$C_t(x)$ that preserves exchangeability (Theorem~\ref{thm:crc-validity-mc3}).


\begin{algorithm}[t]
\small
\renewcommand{\baselinestretch}{1.2}\selectfont
\caption{MC$^3$}
\label{alg:mc3}
\begin{algorithmic}[1]
\Require \parbox[t]{\dimexpr\linewidth-2.5em\relax}{Calibration set $\mathcal{D}_{\mathrm{cal}} = \{(X_i, Y_i)\}_{i=1}^n$, per-expert scores $f_1, \ldots, f_K$, risk level $\alpha$, loss $\ell$ with upper bound $B$; per-expert decision $C_\lambda^{(j)}$ from Eq.~\ref{eq:decision}.}
\Statex
\Statex \textbf{Phase 1: Marginal Calibration} \hfill {\footnotesize\textit{$\triangleright$ per-expert scale}}
\For{$j = 1, \ldots, K$}
    \State $R_n^{(j)}(\lambda) \gets \tfrac{1}{n}\sum_{i=1}^n \ell(C_\lambda^{(j)}(X_i), Y_i)$
    \State $\lambda_j^{(0)} \gets \inf\{\lambda \geq 0 : R_n^{(j)}(\lambda) \leq \alpha\}$
\EndFor
\Statex \textbf{Phase 2: Joint Threshold Searching} \hfill {\footnotesize\textit{$\triangleright$ global scalar~$t$}}
\State Define $\lambda_j(t) \gets t \cdot \lambda_j^{(0)}$ for all $j$
\State $C_t(x) \gets \mathrm{MajVote}\bigl(C_{\lambda_1(t)}^{(1)}(x), \ldots, C_{\lambda_K(t)}^{(K)}(x)\bigr)$
\State $R_n(t) \gets \tfrac{1}{n}\sum_{i=1}^n \ell(C_t(X_i), Y_i)$
\State $\hat{t} \gets \inf\bigl\{t \geq 0 : \tfrac{n}{n+1} R_n(t) + \tfrac{B}{n+1} \leq \alpha\bigr\}$
\State \textbf{return} $\hat{t}$ and $\{\lambda_j = \hat{t} \cdot \lambda_j^{(0)}\}_{j=1}^K$
\Statex \textbf{Deployment.} On test $X_{n+1}$, output $C_{\hat{t}}(X_{n+1})$.
\Statex \hspace{1em} \textit{Guarantee} (Theorem~\ref{thm:crc-validity-mc3}): $\mathbb{E}[\ell(C_{\hat{t}}(X_{n+1}), Y_{n+1})] \leq \alpha$.
\end{algorithmic}
\end{algorithm}

\subsection{Joint Threshold Searching}
\label{sec:joint}

Using the ratios from \S\ref{sec:init} as a fixed shape, we introduce a single global scalar $t \geq 0$ that scales all per-expert thresholds proportionally:
\begin{equation}
    \lambda_j(t) = t \cdot \lambda_j^{(0)}
    \label{eq:ratio_scale}
\end{equation}
We define the \textit{joint vote decision function}, replacing $C_\lambda(x)$ in \S\ref{sec:formulation}:
\begin{equation}
    \scalebox{0.9}{$\displaystyle C_t(x) = \mathrm{MajVote}\!\bigl(C_{\lambda_1(t)}^{(1)}(x),\;\ldots,\;C_{\lambda_K(t)}^{(K)}(x)\bigr)$}
    \label{eq:joint_decision}
\end{equation}
Analogous to~$\lambda$ in Eq.~\ref{eq:decision}, $t$ encodes the level of conservativeness: as $t$ increases, all $\lambda_j(t)$ scale up proportionally, raising the abstention bar for each expert and producing more abstentions, so $\ell(C_t(X_i), Y_i)$ is non-increasing in~$t$ (Lemma~\ref{lem:lossmono}), satisfying the monotonicity requirement of CRC. Threshold searching is performed directly over~$t$:
\begin{equation}
    \hat{t} = \inf\Bigl\{t \geq 0 : \frac{n}{n{+}1}\,R_n(t) + \frac{B}{n{+}1} \leq \alpha\Bigr\}
    \label{eq:t_hat}
\end{equation}
yielding final per-expert thresholds $\lambda_j = \hat{t} \cdot \lambda_j^{(0)}$. Sharing a single scalar $t$ across experts is primarily a search-efficiency choice, reducing a $K$-dimensional joint search to a one-dimensional problem along the linear ray $\lambda_j(t) = t \cdot \lambda_j^{(0)}$, rather than the mechanism securing the CRC bound; the bound itself rests on the two conditions (P1) monotonicity and (P2) exchangeability. Concretely, because each calibration loss $L_i = \ell(C_t(X_i), Y_i)$ and the test loss $L_{n+1} = \ell(C_t(X_{n+1}), Y_{n+1})$ are computed by the same function applied to exchangeable data, the loss random variables $\{L_1, \ldots, L_{n+1}\}$ remain exchangeable, and the CRC bound applies (Theorem~\ref{thm:crc-validity-mc3}). Appendix~\ref{sec:appendix_proofs} shows formal proofs for multi-expert methods.

MC$^3$ simultaneously achieves three properties that no approach from \S\ref{sec:multi_expert} provides: (1)~risk guarantee via a shared $C_t$ across phases; (2)~per-expert heterogeneity, as $\lambda_j$ differ across experts; and (3)~continuous parameterization over the scalar~$t$. The procedure is summarized in Algorithm~\ref{alg:mc3}.

\section{Benchmark Construction}
\label{sec:datasets}

Evaluating CRC for open-ended dialogue requires a pairwise preference benchmark in which every candidate model exposes its logits, yet no such benchmark currently exists. We construct such a benchmark by sampling 100 dialogue contexts from each of three datasets spanning diverse conversational scenarios: ESConv~\cite{esconv} (emotional support), MSC~\cite{msc} (multi-session social chat), and DREAM~\cite{dream} (dialogue comprehension). For each context, candidate responses are generated by four open-weight LLMs: \texttt{gemma-3-12b-it}~\cite{gemma3}, \texttt{Mistral-Nemo-Instruct-2407}~\cite{mistral}, \texttt{Qwen2.5-7B-Instruct}~\cite{qwen25}, and \texttt{Llama-3.1-8B-Instruct}~\cite{llama3}. We enumerate all $\binom{4}{2} = 6$ response pairs per context, yielding 1{,}800 annotated pairs in total with human pairwise preference labels across five quality dimensions.\footnote{\textsc{Panel} is publicly available at \url{https://huggingface.co/datasets/EstellaCheng42/panel}.} The detailed annotation protocol is provided in Appendix~\ref{sec:appendix_data}. Regarding potential data contamination: although the dialogue contexts predate the judge models' pre-training cutoffs, both the candidate responses and the human preference labels are newly produced in this work, so the response--label pairs used for CRC calibration remain uncontaminated; our meta-evaluation of judge--human alignment therefore gains no shortcut from context memorization.

The four LLMs in \textsc{Panel} admit two complementary orderings: generation quality (measured by human preference win rates) and single-judge evaluation accuracy (measured against human pairwise labels). We report both in Appendix~\ref{sec:appendix_data} (Figs.~\ref{fig:winrate} and~\ref{fig:eval_acc}); they agree on the same model being strongest along both axes but otherwise diverge, motivating a multi-expert framework robust to individual-judge weakness. The generator ordering directly governs CRC calibration difficulty: well-separated generators (e.g., on ESConv) yield large pairwise margins that make calibration tractable, whereas clustered generators (e.g., on MSC) compress margins and demand more discriminative conformity scores.

\begin{table*}[t]
\centering
\caption{Main results across scoring functions and three datasets ($\alpha = 0.1$). \textit{Pointwise}: each response scored independently, pairwise signal derived from score difference. \textit{Pairwise}: a separate judge LLM directly compares two responses via logit extraction. \textit{Multi.}: multi-expert ensembles using homogeneous experts (same model, varied prompts/seeds). \textbf{AccR}: acceptance rate; \textbf{Acc}: pairwise accuracy; \textbf{AUC}: area under the ROC curve; \textbf{Risk}: fraction of all pairs committed to an incorrect preference. CRC requires Risk~$\leq \alpha$. Within each category and dataset, the best result is in \textbf{bold}. $^{\dagger}$Pref.\ Prob.\ w/ swap is the bidirectional preference scoring (BPE) of~\citet{scope}.}
\label{tab:main_results}
\small
\setlength{\tabcolsep}{4.5pt}
\begin{tabular}{@{}ll cccc cccc cccc@{}}
\toprule
& \multirow{2}{*}{\textbf{Score}} & \multicolumn{4}{c}{\textbf{ESConv}} & \multicolumn{4}{c}{\textbf{MSC}} & \multicolumn{4}{c}{\textbf{DREAM}} \\
\cmidrule(lr){3-6} \cmidrule(lr){7-10} \cmidrule(lr){11-14}
& & \textbf{AccR}$\uparrow$ & \textbf{Acc}$\uparrow$ & \textbf{AUC}$\uparrow$ & \textbf{Risk}$\downarrow$ & \textbf{AccR}$\uparrow$ & \textbf{Acc}$\uparrow$ & \textbf{AUC}$\uparrow$ & \textbf{Risk}$\downarrow$ & \textbf{AccR}$\uparrow$ & \textbf{Acc}$\uparrow$ & \textbf{AUC}$\uparrow$ & \textbf{Risk}$\downarrow$ \\
\midrule
\multirow{7}{*}{\rotatebox{90}{\scriptsize Pointwise}}
& Causal & \underline{.644} & \underline{.743} & \underline{.828} & .124 & .351 & .629 & .677 & .121 & \underline{.473} & .590 & .685 & .140 \\
& Consistency & .356 & .654 & .687 & .116 & .158 & .553 & .528 & \underline{.082} & .158 & .522 & .497 & \textbf{.069} \\
& Self-Certainty & .481 & .661 & .684 & .122 & \underline{.381} & .618 & \textbf{.753} & \textbf{.064} & .322 & .556 & .628 & \underline{.076} \\
& DeepConf & .480 & .668 & .624 & .185 & .309 & \underline{.667} & .690 & .094 & .255 & .532 & .565 & .096 \\
& Log Prob & \textbf{.794} & \textbf{.834} & \textbf{.910} & \underline{.091} & \textbf{.389} & .647 & .676 & .118 & \textbf{.547} & \textbf{.702} & \textbf{.738} & .135 \\
& \quad w/ self-judge & .388 & .643 & .684 & \textbf{.087} & .377 & \textbf{.688} & \underline{.693} & .097 & .396 & \underline{.620} & \underline{.687} & .100 \\
\midrule
\multirow{5}{*}{\rotatebox{90}{\scriptsize Pairwise}}
& Rubric & .314 & .619 & .713 & \textbf{.042} & .196 & .486 & .535 & \underline{.079} & .172 & .573 & .678 & \textbf{.036} \\
& Verb.\ Conf. & .747 & .769 & .880 & .107 & \underline{.390} & .641 & .699 & .108 & .447 & .662 & .791 & \underline{.084} \\
& Pref.\ Prob.  & .815 & \underline{.873} & \textbf{.937} & .094 & .346 & .654 & .692 & .103 & .463 & \underline{.720} & \underline{.796} & .093 \\
&\quad w/ swap$^{\dagger}$ & \underline{.847} & .854 & .913 & .097 & .348 & \underline{.663} & \textbf{.743} & \textbf{.074} & \underline{.479} & \textbf{.728} & \textbf{.809} & .094 \\
& \quad w/ self-judge & \textbf{.852} & \textbf{.876} & \underline{.936} & \underline{.092} & \textbf{.405} & \textbf{.672} & \underline{.736} & .095 & \textbf{.548} & .692 & .784 & .101 \\
\midrule
\multirow{2}{*}{\rotatebox{90}{\scriptsize Multi.}}
& Score Averaging & \underline{.951} & \underline{.882} & \underline{.940} & \textbf{.094} & \underline{.489} & \underline{.712} & \textbf{.800} & \textbf{.088} & \underline{.550} & \underline{.751} & \textbf{.830} & \underline{.085} \\
& Decision Voting & \textbf{.973} & \textbf{.887} & \textbf{.945} & \underline{.098} & \textbf{.495} & \textbf{.732} & \underline{.798} & \underline{.097} & \textbf{.566} & \textbf{.755} & \textbf{.830} & \textbf{.082} \\
\bottomrule
\end{tabular}
\end{table*}

\section{Experiments}
\label{sec:experiments}
\subsection{Experimental Setup}

\textbf{Setup.} We evaluate all scoring functions on each dataset and report results averaged over 5 random 1:1 calibration/test splits, partitioned by \textit{conversation ID} to prevent leakage between turns of the same conversation. The risk level is set to $\alpha = 0.1$ throughout for all methods. All multi-expert methods are based on pairwise logit-based preference unless otherwise noted. All experiments are conducted on NVIDIA A800 80GB GPUs. 

\textbf{Scoring Functions.} We evaluate pointwise scores (log probability, Self-Certainty~\cite{selfcertainty}, DeepConf~\cite{deepconf}, Causal~\cite{causal}, Consistency~\cite{consistency}) and pairwise scores (Pref.~Prob., Verbalized Confidence, Rubric). For pointwise scores, the pairwise conformity signal is the difference between the two response scores, with the sign indicating preference and the magnitude indicating confidence. Per-method implementation details are in Appendix~\ref{sec:appendix}.

\textbf{Evaluator Selection.} We use the top-$K$ strongest evaluators throughout: $K{=}1$ for single-judge experiments, and $K{=}3$ for multi-expert experiments, with the joint decision committed via standard majority voting. Evaluators are ranked by single-judge accuracy against human pairwise labels; detailed rankings are reported in Appendix~\ref{sec:appendix_data}. Rows marked ``w/ swap'' implement~\citet{scope}'s bidirectional preference scoring (BPE): the judge is queried twice with candidate order swapped, and the preference probabilities are averaged. In the multi-expert setting, a \textit{homogeneous} ensemble queries the same model $K$ times with varied random seeds and prompt templates at temperature 0.7; a \textit{heterogeneous} ensemble uses the top-$K$ distinct models.

\textbf{Metrics.} Within CRC, risk is held below $\alpha$ by abstaining on uncertain pairs, so the operative question is how much coverage a method retains under this guarantee. We report four metrics, all computed over non-tie pairs (human-annotated ties are excluded): \textit{Acceptance Rate} (AccR~$\uparrow$, the fraction of pairs not abstained), the coverage recovered under the risk constraint; \textit{Accuracy} (Acc~$\uparrow$, the fraction of pairs whose score-difference sign matches the human preference label, computed over all pairs regardless of CRC acceptance), which gauges the quality of the aggregated decision independently of abstention; \textit{AUC} ($\uparrow$, area under the ROC curve, capturing the threshold-independent discriminative power of the scoring function); and \textit{Risk} ($\downarrow$, the rate of wrong-and-accepted predictions, with abstentions incurring zero loss; CRC ensures Risk~$\leq \alpha$, with a value closer to $\alpha$ indicating tighter calibration), the quantity CRC provably controls.

\begin{table*}[t]
\centering
\caption{Multi-expert aggregation results across three datasets ($\alpha = 0.1$, 5 random 1:1 cal/test splits). \textbf{AccR}: acceptance rate; \textbf{Acc}: pairwise accuracy; \textbf{Risk}: fraction of all pairs committed to an incorrect preference. CRC requires Risk~$\leq \alpha$. The best result is in \textbf{bold}. $^{\dagger}$Test-time Voting breaks exchangeability between calibration and test, voiding its CRC guarantee, and is excluded from the ranking.}
\label{tab:multi_expert}
\small
\setlength{\tabcolsep}{7pt}
\begin{tabular}{@{}l c ccc ccc ccc@{}}
\toprule
 \multicolumn{1}{l}{\multirow{2}{*}{\textbf{Ensemble}}} & \multicolumn{1}{c}{\multirow{2}{*}{\textbf{Strategy}}}
  & \multicolumn{3}{c}{\textbf{ESConv}}
  & \multicolumn{3}{c}{\textbf{MSC}}
  & \multicolumn{3}{c}{\textbf{DREAM}} \\
\cmidrule(lr){3-5} \cmidrule(lr){6-8} \cmidrule(lr){9-11}
& & \textbf{AccR}$\uparrow$ & \textbf{Acc}$\uparrow$ & \textbf{Risk}$\downarrow$
    & \textbf{AccR}$\uparrow$ & \textbf{Acc}$\uparrow$ & \textbf{Risk}$\downarrow$
    & \textbf{AccR}$\uparrow$ & \textbf{Acc}$\uparrow$ & \textbf{Risk}$\downarrow$ \\
\midrule
\multirow{2}{*}{Homogeneous}
& Score Averaging     & \underline{.951} & \underline{.882} & \textbf{.094} & \underline{.489} & \underline{.712} & \textbf{.088} & \underline{.550} & \underline{.751} & \underline{.085} \\
& Decision Voting     & \textbf{.973} & \textbf{.887} & \underline{.098} & \textbf{.495} & \textbf{.732} & \underline{.097} & \textbf{.566} & \textbf{.755} & \textbf{.082} \\
\midrule
\multirow{4}{*}{Heterogeneous}
& Score Averaging     & .845 & .853 & \underline{.092} & \underline{.396} & \textbf{.685} & \underline{.086} & \underline{.508} & .715 & \underline{.089} \\
& Decision Voting     & \underline{.846} & \underline{.866} & \textbf{.090} & .364 & .677 & \textbf{.083} & .449 & \underline{.725} & .089 \\
& \textcolor{gray!70}{Test-time Voting$^{\dagger}$}  & \textcolor{gray!70}{.962} & \textcolor{gray!70}{.866} & \textcolor{gray!70}{.118} & \textcolor{gray!70}{.670} & \textcolor{gray!70}{.677} & \textcolor{gray!70}{.139} & \textcolor{gray!70}{.692} & \textcolor{gray!70}{.725} & \textcolor{gray!70}{.154} \\
& MC$^3$    & \textbf{.894} & \textbf{.866} & .092 & \textbf{.462} & \underline{.677} & \textbf{.083} & \textbf{.540} & \textbf{.725} & \textbf{.081} \\
\bottomrule
\end{tabular}
\end{table*}

\subsection{Results and Analysis}

\paragraph{Base score selection (Table~\ref{tab:main_results}).}
Table~\ref{tab:main_results} reports our empirical study of base scoring functions for CRC in open-ended dialogue. The base score is the most critical component of CRC: it determines whether the calibrated threshold reflects genuine preference uncertainty. Yet no prior work has systematically identified the most effective scoring functions. We address this gap by comparing two families: pointwise scores (e.g., Log~Prob, Self-Certainty, DeepConf), where each response is scored independently, and pairwise scores (e.g., Pref.~Prob., Verbalized Confidence, Rubric), where the judge directly compares two responses. Empirically, logit-based pairwise extraction (Pref.~Prob.) outperforms the strongest pointwise score (Log~Prob) in both accuracy and AUC across all three datasets, while keeping risk near~$\alpha$. This advantage reflects a deeper alignment between pairwise judgment and CRC: since open-ended dialogue lacks an absolute quality criterion, relative preference within a shared context provides a more natural conformity signal. We therefore adopt Pref.~Prob.\ as the per-expert base score for all multi-expert experiments.


\paragraph{Bias robustness (Table~\ref{tab:main_results}).}
Known biases in LLM-as-a-Judge evaluation have a diminishing impact with pairwise logit-based scoring. Self-judge bias, which is severe under pointwise scoring (Log~Prob drops by 19.1~pp on ESConv), becomes negligible in the pairwise setting: enabling self-judging changes accuracy by less than 3~pp across all datasets and in both directions, well within typical run-to-run variation. Position bias likewise yields only marginal gains under swap correction (around 1~pp on average), suggesting its impact is limited at current model scale. These surface biases cannot explain the broader gains of multi-expert aggregation, which addresses model-idiosyncratic tendencies that simple rotation alone cannot reach. Take-away: surface biases (position, self-enhancement) are largely resolved at current model scale under pairwise-logit scoring, yet the persistent gains from multi-expert aggregation in Table~\ref{tab:multi_expert} point to a deeper, model-specific bias that no single-expert correction can reach, motivating consensus prediction across experts.

\paragraph{Multi-expert in homogeneous settings (Tables~\ref{tab:main_results},~\ref{tab:multi_expert}).}
When all $K$ experts share the same model (varied prompts and seeds), Decision Voting improves over a single Pref.~Prob.\ judge on both acceptance rate and accuracy across all three datasets, most strongly on acceptance rate (up to $+15.8$~pp), while Score Averaging gives smaller, less consistent gains. Risk stays at or below~$\alpha$ throughout (at most $.098$), so the guarantee holds with no violation. The edge of Decision Voting reflects vote-level CRC acting directly on the decision function rather than on a composite score. This matches \S\ref{sec:avg_vote}: homogeneous experts share one scoring scale, so neither score-level nor threshold-level aggregation introduces a distributional mismatch.

\paragraph{The heterogeneous challenge (Table~\ref{tab:multi_expert}).}
The calibration strategy becomes decisive when experts come from different models. Both \textit{Score Averaging} and \textit{Decision Voting} stay risk-valid here, yet recover only limited coverage, for a shared reason: a single uniform threshold cannot match the experts' distinct scoring scales, so narrow-range experts are silenced and the ensemble abstains more often. \textit{Score Averaging} compounds this by first collapsing the incompatible per-expert scores into one composite. \textit{Test-time Voting} instead appears to recover high coverage, but it calibrates each expert in isolation and votes only at test time, so the decision function differs between calibration and test; exchangeability breaks, risk climbs above~$\alpha$ (up to $.154$), and we exclude it from the ranking. Take-away: under a single shared threshold, heterogeneous ensembles collapse toward their most wide-scale expert; recovering true multi-expert coverage requires a decision function that preserves per-expert scales while remaining shared across calibration and test, which is exactly what MC$^3$ delivers.

\paragraph{MC$^3$ in the heterogeneous setting (Table~\ref{tab:multi_expert}).}
Ratio initialization (\S\ref{sec:init}) captures each expert's native scale via $\lambda_j^{(0)}$, and joint threshold searching (\S\ref{sec:joint}) finds a global scalar $\hat{t}$ that proportionally rescales all per-expert thresholds through the unified decision function $C_t(x)$. As a result, MC$^3$ recovers the highest acceptance rate among the CRC-valid methods on all three datasets, while keeping risk within~$\alpha$. Its accuracy closely tracks that of the other voting methods, since all use the same majority vote and differ only in threshold selection; the methods separate on AccR and Risk, not on Acc, which scores every pair regardless of acceptance. The full benefit of multi-expert evaluation emerges only when calibration and aggregation use the same decision function. Aggregation should therefore act at the decision level with per-expert thresholds, rather than at the score level through a shared composite.

\section{Related Work}
\label{sec:related}

\paragraph{Conformal Prediction in LLMs.}
Conformal prediction has been increasingly applied to quantify uncertainty in LLM generation tasks. Early extensions apply conformal methods to multiple-choice QA~\cite{kumar2023conformal}, open-ended language modeling~\cite{quach2023conformal}, and API-based settings without logit access~\cite{su2024api}. Two recent works extend CRC to pairwise LLM-as-a-Judge evaluation: \citet{jung2024trust} calibrates a single LLM judge by sampling multiple simulated annotators from one model to provide a provable guarantee on human agreement, and \citet{scope} optimizes the conformity score function itself to improve selective pairwise judging. However, neither addresses multi-expert pairwise LLM-as-a-Judge CRC.


\paragraph{Preference Datasets for Open-Ended Dialogues.}
Preference datasets for LLM alignment have grown rapidly (e.g., HH-RLHF~\cite{hhrlhf}, UltraFeedback~\cite{ultrafeedback}), yet primarily target general instruction-following along a single quality dimension. Open-ended conversational scenarios remain underserved. Many dialogue-focused datasets rely on \textit{within-model} preference pairs and provide no human pairwise comparisons across different LLMs; ESC-Pro~\cite{Zhao2025} and EmPO~\cite{Sotolar2024} are representative cases. Among the few that attempt \textit{between-model} evaluation, HEART~\cite{heart2025} evaluates proprietary black-box models without logit access, and o2mDial~\cite{Lee2025} covers only simple chitchat via closed-source APIs. Neither supports white-box calibration. 


\section{Discussion}
Two design choices in our framework merit clarification. Our framework requires logit access, restricting applicability to open-weight LLMs. This aligns with how LLM judges are actually deployed in high-stakes, privacy-sensitive settings, e.g., clinical or enterprise pipelines under HIPAA/GDPR, where local open-weight judges such as Prometheus~\cite{kim2024prometheus} are the norm; API-only judges instead require alternative conformity scores~\cite{su2024api}. Our $0/1$ loss further treats abstention as free, an intentional fit for high-stakes settings where errors are intolerable and human deferral is a safe fallback, keeping the target risk directly interpretable as an exact error rate. Deferral cost is tracked transparently via Acceptance Rate (e.g., MC$^3$'s AccR of $0.462$ on MSC at $\alpha{=}0.1$ acknowledges a ${\sim}54\%$ deferral rate); extending to cost-sensitive settings only requires swapping the abstention cost in the loss.

\section{Conclusion}
We formulated pairwise LLM-as-a-Judge evaluation as selective prediction under CRC and conducted the first systematic study of multi-expert CRC. We designed two CRC-adapted strategies, \textit{Score Averaging} and \textit{Decision Voting}, that work well for homogeneous experts but recover only limited coverage for heterogeneous ones. To resolve this, we proposed MC$^3$, which uses per-expert threshold ratios under a unified decision function~$C_t(x)$ shared by calibration and test, preserving exchangeability. On \textsc{Panel}, our 1{,}800-pair human pairwise-preference benchmark with full logit access, MC$^3$ recovers the highest acceptance rate among CRC-valid methods across all three domains, at comparable accuracy and while maintaining the formal risk guarantee.

\section*{Acknowledgements}
\label{sec:ack}
This research was supported by the Australian Government through the Australian Research Council's Discovery Project funding scheme (Grant No.: DP260100218). Dr Qiuhong Ke is the recipient of an Australian Research Council Discovery Early Career Researcher Award (project number DE250100030) funded by the Australian Government.
\section*{Limitations}

Our framework has two open limitations. First, we address only the selective-prediction frontend (commit vs.\ abstain); the design of the downstream human-review workflow for abstained cases remains open. Second, the CRC guarantee relies on exchangeability, which can break under distribution shift. A practical mitigation is to monitor per-expert threshold ratios $\lambda_j^{(0)}$ on a rolling window and rerun Algorithm~\ref{alg:mc3} once drift exceeds a preset threshold; this recalibration is cheap because CRC is a training-free threshold search.

\section*{Ethics Statement}
All preference labels in \textsc{Panel} are produced
in-house by three annotators from our research team, without
crowdsourcing, following the multi-dimensional rubric in
Appendix~\ref{sec:appendix_data}; no
separate institutional ethics review was required, as annotation was
carried out internally over publicly released, non-identifying dialogue
data. All dialogue contexts are used consistent with their original
licenses: ESConv~\cite{esconv} (CC BY-NC 4.0), MSC~\cite{msc}
(CC BY-NC 4.0, Meta ParlAI), and DREAM~\cite{dream} (non-commercial
research use); none contain PII, and ESConv's emotionally sensitive
conversations are pseudonymous in the original release. The four
candidate response models are used under their public licenses:
\texttt{gemma-3-12b-it}~\cite{gemma3} (Gemma Terms of Use),
\texttt{Mistral-Nemo-Instruct-2407}~\cite{mistral} (Apache~2.0),
\texttt{Qwen2.5-7B-Instruct}~\cite{qwen25} (Apache~2.0), and
\texttt{Llama-3.1-8B-Instruct}~\cite{llama3} (Llama~3.1 Community
License). All response generation and judge evaluation run locally on
institutional GPUs, without transmitting dialogue content to
third-party APIs.

\bibliographystyle{acl_natbib}
\bibliography{custom}

\clearpage
\appendix

\section{Dataset and Annotation Details}
\label{sec:appendix_data}

\paragraph{Datasets.}
ESConv~\cite{esconv} contains dialogues between help-seekers and supporters discussing personal challenges, with emotionally sensitive exchanges that require empathy and strategic support. MSC~\cite{msc} consists of long-term, persona-grounded conversations spanning multiple sessions. DREAM~\cite{dream} is a dialogue-based reading-comprehension dataset comprising short, multi-turn exchanges drawn from English exam materials. Together, these datasets span three complementary registers: short comprehension-oriented exchanges (DREAM), extended persona-grounded social conversation (MSC), and emotionally complex support dialogues (ESConv).

\paragraph{Human Annotation.}
All pairwise preference labels are obtained via manual annotation by \textbf{three independent annotators}. For each response pair generated from the same dialogue context, each annotator independently determines which response is superior along five dimensions: \textit{relevance} to the immediate dialogue history, \textit{specificity} of the information provided, \textit{consistency} with conversational context and common sense, \textit{empathy} in understanding and responding to emotional states, and \textit{overall} quality. The \textit{overall} dimension serves as the final preference label in all subsequent experiments. Annotators may also indicate a \textit{Tie} when the two responses are of comparable quality. To mitigate position bias, the presentation order is randomized for each pair.

\paragraph{Adjudication Protocol.}
Each of the 1{,}800 pairs is labeled independently by all three annotators. Pairs on which all three annotators agree on the \textit{overall} dimension are assigned the consensus label directly. For pairs with at least one disagreement, the three annotators convene in a joint discussion session, review the pair together to surface their underlying reasoning, and reach a consensus label through deliberation. Adjudication is applied to the \textit{overall} dimension, which serves as the preference label in all experiments.

\paragraph{Model Quality Rankings.}
Figures~\ref{fig:winrate} and~\ref{fig:eval_acc} report the two complementary orderings of the four LLMs in \textsc{Panel}: human-annotated win rates (generation quality) and pairwise evaluation accuracy (judge quality), respectively.

\section{Experimental Details}
\label{sec:appendix}

\paragraph{Scoring Function Implementations.}
The pointwise scores we consider are log probability, Self-Certainty~\cite{selfcertainty}, and DeepConf~\cite{deepconf}; each requires only a single forward pass per response. The Causal score~\cite{causal} uses fine-tuned RoBERTa classifiers. Consistency~\cite{consistency} scoring uses 5 independent samples per response with temperature 0.7, clustered via DeBERTa-based NLI. For Pref.~Prob., we extract logits at the \emph{first generated token position} for the tokens \texttt{A} and \texttt{B} as encoded by each model's own tokenizer; we verified that \texttt{A} and \texttt{B} are single-token under all four model tokenizers given our prompt formatting. The conformity score in Eq.~\ref{eq:base_score} is then computed as a \emph{binary softmax restricted to these two logits}, deliberately excluding the rest of the vocabulary. A full-vocabulary softmax would otherwise absorb probability mass into tokens such as \texttt{The}, leading whitespace, or end-of-sequence that carry no preference content under our prompt template, diluting the comparative signal that motivates pairwise judging. The ``w/ swap'' variant of Pref.~Prob.\ is the \textit{bidirectional preference scoring} (BPE) introduced by~\citet{scope}: we query the judge twice with the candidate order swapped, extract the binary-softmax probabilities under both orderings via the same protocol above, and average them into a position-symmetric per-expert score. We therefore include~\citet{scope}'s BPE directly in our base-score comparison under the ``Pref.~Prob.\ w/ swap'' row of Table~\ref{tab:main_results}. Verbalized Confidence (Verb.\ Conf.) prompts the judge to output a numerical confidence score along with its preference, while Rubric prompts the judge to select from a 7-point preference scale.

\begin{figure*}[t]
\centering
\begin{minipage}[t]{0.48\linewidth}
    \centering
    \includegraphics[width=\linewidth]{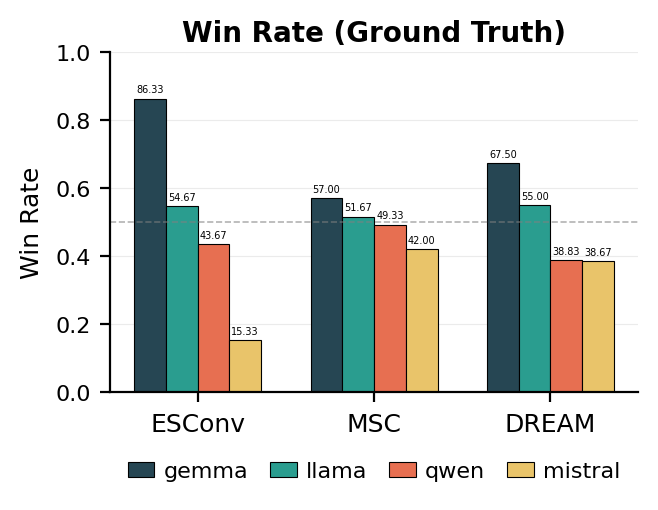}
    \caption{Human-annotated model win rates across the three dialogue datasets. The generator ranking is Gemma-3-12B $>$ Llama-3.1-8B $>$ Qwen2.5-7B $>$ Mistral-Nemo on every dataset.}
    \label{fig:winrate}
\end{minipage}\hfill
\begin{minipage}[t]{0.48\linewidth}
    \centering
    \includegraphics[width=\linewidth]{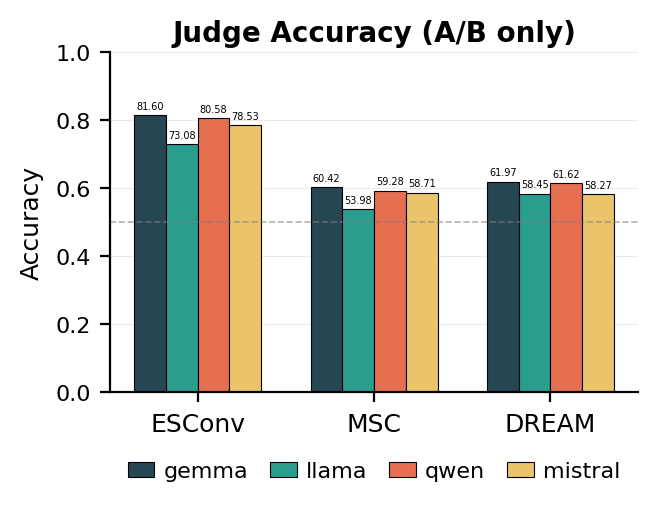}
    \caption{Single-judge accuracy of each LLM against human pairwise labels across the three dialogue datasets. The dataset-averaged ranking is Gemma $>$ Qwen $>$ Mistral $>$ Llama.}
    \label{fig:eval_acc}
\end{minipage}
\end{figure*}

\section{Proofs of CRC Validity for Multi-Expert Methods}
\label{sec:appendix_proofs}

This appendix establishes that the three multi-expert methods of \S\ref{sec:multi_expert} and \S\ref{sec:mc3}, namely \textit{Score Averaging}, \textit{Decision Voting}, and \textit{MC}$^3$, each satisfy the conditions of the Conformal Risk Control (CRC) theorem of \citet{crc} and therefore inherit its finite-sample, distribution-free risk guarantee
\begin{equation}
    \mathbb{E}\bigl[\ell(C_{\hat{\lambda}}(X_{n+1}), Y_{n+1})\bigr] \;\leq\; \alpha
    \label{eq:appendix_target}
\end{equation}
for an exchangeable sequence of calibration and test pairs $(X_i, Y_i)_{i=1}^{n+1}$. Throughout, $\ell(\hat{y}, y) := \mathbf{1}[\hat{y} \in \mathcal{Y} \wedge \hat{y} \neq y]$ is the indicator loss of Eq.~\ref{eq:loss}, so $\ell \in \{0, 1\}$ and the loss upper bound is $B = 1$.

\subsection{Proof Goal and Strategy}
\label{sec:appendix_goal}

\paragraph{Goal.}
For each of the three methods, with its associated calibrated threshold (denoted $\hat{\lambda}$ for Score Averaging and Decision Voting, and $\hat{t}$ for MC$^3$), we wish to verify Eq.~\ref{eq:appendix_target}.

\paragraph{Strategy.}
We invoke Theorem~1 of \citet{crc}, which we restate here for completeness:

\begin{theorem}[CRC; \citealt{crc}, Theorem~1]
\label{thm:crc}
Let $\{L_i(\cdot)\}_{i=1}^{n+1}$ be an exchangeable collection of random functions $L_i : \Lambda \to (-\infty, B]$ satisfying, almost surely, $L_i(\lambda_{\max}) \leq \alpha$ and $L_i(\lambda)$ non-increasing and right-continuous in $\lambda$. Define
\begin{equation*}
    \hat{\lambda} \;=\; \inf\!\left\{\,\lambda \,:\, \tfrac{n}{n{+}1}\widehat{R}_n(\lambda) + \tfrac{B}{n{+}1} \,\leq\, \alpha\right\},
\end{equation*}
where $\widehat{R}_n(\lambda) = \tfrac{1}{n}\sum_{i=1}^n L_i(\lambda)$. Then $\mathbb{E}[L_{n+1}(\hat{\lambda})] \leq \alpha$.
\end{theorem}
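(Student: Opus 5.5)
The statement is the CRC theorem of \citet{crc}, restated here for completeness, so I will follow the standard argument of that paper rather than anything specific to the multi-expert setting. The proof has two pieces. First, I define an oracle threshold that uses all $n{+}1$ losses. Second, I show that the data-dependent $\hat{\lambda}$ always dominates this oracle, and then use exchangeability to bound the risk at the oracle.

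\textbf{The oracle threshold.} Let $\widehat{R}_{n+1}(\lambda) = \frac{1}{n+1}\sum_{i=1}^{n+1} L_i(\lambda)$ and define
\[
\hat{\lambda}' = \inf\{\lambda : \widehat{R}_{n+1}(\lambda) \leq \alpha\}.
\]
This set is nonempty because $L_i(\lambda_{\max}) \leq \alpha$ almost surely for every $i$. By right-continuity and monotonicity of each $L_i$, the infimum is attained, so $\widehat{R}_{n+1}(\hat{\lambda}') \leq \alpha$. Since $\hat{\lambda}'$ is a symmetric function of $(L_1, \ldots, L_{n+1})$, exchangeability gives that $L_{n+1}(\hat{\lambda}')$ has the same distribution as $L_i(\hat{\lambda}')$ for every $i$. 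Therefore
\[
\mathbb{E}[L_{n+1}(\hat{\lambda}')] = \mathbb{E}[\widehat{R}_{n+1}(\hat{\lambda}')] \leq \alpha.
\]

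\textbf{Comparing $\hat{\lambda}$ with the oracle.} Because $L_{n+1}(\lambda) \leq B$ for all $\lambda$,
\[
\widehat{R}_{n+1}(\lambda) \leq \tfrac{n}{n+1}\widehat{R}_n(\lambda) + \tfrac{B}{n+1}.
\]
Hence every $\lambda$ feasible for the definition of $\hat{\lambda}$ is also feasible for $\hat{\lambda}'$, which yields $\hat{\lambda}' \leq \hat{\lambda}$. Since $L_{n+1}$ is non-increasing, $L_{n+1}(\hat{\lambda}) \leq L_{n+1}(\hat{\lambda}')$. Taking expectations completes the proof.

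\textbf{Main obstacle.} The delicate step is the exchangeability claim. Applying exchangeability to $L_i$ evaluated at a random $\hat{\lambda}'$ requires $\hat{\lambda}'$ to be invariant under permutations of the full collection. This holds because $\hat{\lambda}'$ depends on the $L_i$ only through their symmetric average. I would state this invariance explicitly, and also note two boundary conventions: the infimum over $\Lambda$ is attained by right-continuity, and if the feasible set for $\hat{\lambda}$ is empty we set $\hat{\lambda} = \lambda_{\max}$, where the bound still holds by assumption.
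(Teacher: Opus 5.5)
Your proposal is correct and follows the standard argument of the original CRC paper. It uses an oracle threshold $\hat{\lambda}'$ built from all $n{+}1$ losses, gets $\hat{\lambda}' \leq \hat{\lambda}$ from the $B/(n{+}1)$ slack, and uses exchangeability with $\hat{\lambda}'$ permutation-invariant to get $\mathbb{E}[L_{n+1}(\hat{\lambda}')] \leq \alpha$; the paper gives no proof of its own, only restating the theorem from that reference, so your route is the one the paper relies on.
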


It suffices to verify two properties for each method:

\textbf{(P1) Monotonicity.} For every sample $(X_i, Y_i)$, the function $\lambda \mapsto L_i(\lambda)$ (or $t \mapsto L_i(t)$ for MC$^3$) is non-increasing.

\textbf{(P2) Exchangeability.} The collection $\{L_i(\cdot)\}_{i=1}^{n+1}$ is exchangeable as a sequence of random functions.
Boundedness ($L_i \in [0, 1]$) and right-continuity (each $L_i$ is a step function in its threshold variable) are immediate. Existence of $\lambda_{\max}$ with $L_i(\lambda_{\max}) = 0$ follows because for any threshold strictly exceeding $\max_j |f_j(X_i)|$ (resp. $\max_j |f_j(X_i)| / \lambda_j^{(0)}$ for MC$^3$, assuming $\lambda_j^{(0)} > 0$), every expert abstains and the loss vanishes. We can then take $\lambda_{\max} = \infty$ (or any finite upper bound satisfying the same).

We establish (P1) via the voting lemmas of \S\ref{sec:appendix_lemmas}, and (P2) via the following one-line observation, used uniformly across all three methods:

\begin{lemma}[Pointwise exchangeability]
\label{lem:exch}
Let $\{Z_i\}_{i=1}^{n+1}$ be an exchangeable sequence of random variables and $g$ a deterministic measurable function. Then $\{g(Z_i)\}_{i=1}^{n+1}$ is exchangeable.
\end{lemma}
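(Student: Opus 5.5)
The plan is to argue directly from the definition of exchangeability via finite-dimensional distributions. Recall that $\{Z_i\}_{i=1}^{n+1}$ is exchangeable if for every permutation $\pi$ of $\{1,\ldots,n+1\}$ the permuted vector $(Z_{\pi(1)},\ldots,Z_{\pi(n+1)})$ has the same joint law as $(Z_1,\ldots,Z_{n+1})$. We want the analogous statement for $W_i := g(Z_i)$. We introduce the product map $G(z_1,\ldots,z_{n+1}) := (g(z_1),\ldots,g(z_{n+1}))$. It is measurable with respect to the product $\sigma$-algebras because each coordinate is a measurable function of a single coordinate. The key observation is that $G$ commutes with coordinate permutations: writing $\sigma_\pi$ for the map that reorders coordinates by $\pi$, we have $G\circ\sigma_\pi = \sigma_\pi\circ G$.

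The argument then runs in three short steps. First, fix $\pi$ and note that $(W_{\pi(1)},\ldots,W_{\pi(n+1)}) = \sigma_\pi(G(Z)) = G(\sigma_\pi(Z))$. Second, exchangeability gives $\sigma_\pi(Z) \stackrel{d}{=} Z$. Since equality in law is preserved under any measurable map (pushforward of equal measures), $G(\sigma_\pi(Z)) \stackrel{d}{=} G(Z) = (W_1,\ldots,W_{n+1})$. Third, $\pi$ was arbitrary, so $\{W_i\}$ is exchangeable. Concretely, for measurable $A$ we have $\Pr[\sigma_\pi G(Z)\in A] = \Pr[\sigma_\pi Z \in G^{-1}(A)] = \Pr[Z\in G^{-1}(A)] = \Pr[G(Z)\in A]$.

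The only point needing care, and the one relevant to how the lemma is used later, is the setting of \emph{random functions}. There $Z_i = (X_i,Y_i)$, and $g$ maps $Z_i$ to the loss function $L_i(\cdot) = \ell(C_\cdot(X_i),Y_i)$, which takes values in a function space over $\Lambda$ (or over $t \ge 0$ for MC$^3$). I would handle this in one of two ways:
\begin{itemize}
\item Equip the codomain with the cylinder $\sigma$-algebra. Measurability of $g$ then reduces to measurability of $z\mapsto \ell(C_\lambda(x),y)$ for each fixed $\lambda$, and the argument above applies verbatim.
\item Alternatively, apply the lemma to finitely many evaluation points $(\lambda_1,\ldots,\lambda_m)$. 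Exchangeability of all finite-dimensional marginals then yields exchangeability of the random functions. Since each $L_i$ is a right-continuous step function, it is determined by its values on a countable dense set, so nothing is lost.
\end{itemize}

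The essential hypothesis, and the main thing to emphasize, is that $g$ is \emph{the same deterministic function for every index}, including $i=n+1$. For MC$^3$ this fails if $g$ depends on data-driven quantities like $\lambda_j^{(0)}$ computed from the calibration set. So I would state that the lemma is applied conditionally on (or with) those quantities fixed. This is exactly the mismatch the paper flags for Test-time Voting, where the test-time map differs from the calibration map.
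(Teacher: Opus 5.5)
Your argument is correct and is essentially the paper's proof: your product map $G$ is the paper's pointwise application $g_*$, and both proofs rest on $(g(Z_{\pi(1)}),\ldots,g(Z_{\pi(n+1)})) = G(\sigma_\pi Z) \overset{d}{=} G(Z)$, i.e.\ equality in law is preserved under a measurable pushforward. One caution on your closing remark: conditioning on $\lambda_j^{(0)}$ computed from $Z_1,\ldots,Z_n$ does \emph{not} restore exchangeability, because it changes the conditional law of the calibration points but not that of $Z_{n+1}$. So the lemma covers MC$^3$ only when the ratios are genuinely fixed, for example computed on data independent of $(Z_1,\ldots,Z_{n+1})$.
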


\begin{proof}
For any permutation $\pi$ of $\{1, \ldots, n+1\}$, $(g(Z_{\pi(1)}), \ldots, g(Z_{\pi(n+1)})) = g_{*}(Z_{\pi(1)}, \ldots, Z_{\pi(n+1)}) \overset{d}{=} g_{*}(Z_1, \ldots, Z_{n+1}) = (g(Z_1), \ldots, g(Z_{n+1}))$, where $g_{*}$ denotes pointwise application of $g$.
\end{proof}

\subsection{Voting Lemmas}
\label{sec:appendix_lemmas}

Throughout this subsection, let $K$ be the number of experts and $M = \lfloor K/2 \rfloor + 1$ the strict-majority threshold, satisfying $K - M < M$.

\begin{lemma}[Per-expert and vote-count monotonicity]
\label{lem:perexpert}\label{lem:counts}
For each expert $j$, the per-expert decision $C_{\lambda}^{(j)}(x)$ defined in Eq.~\ref{eq:decision} satisfies, for any $\lambda_1 \leq \lambda_2$:
\begin{align*}
C_{\lambda_1}^{(j)}(x) = A      &\;\Longrightarrow\; C_{\lambda_2}^{(j)}(x) \in \{A, \text{abstain}\},\\
C_{\lambda_1}^{(j)}(x) = B      &\;\Longrightarrow\; C_{\lambda_2}^{(j)}(x) \in \{B, \text{abstain}\},\\
C_{\lambda_1}^{(j)}(x) = \text{abstain}      &\;\Longrightarrow\; C_{\lambda_2}^{(j)}(x) = \text{abstain}.
\end{align*}
Consequently, the vote counts $v_A(\lambda) := |\{j : C_\lambda^{(j)}(x) = A\}|$, $v_B(\lambda) := |\{j : C_\lambda^{(j)}(x) = B\}|$, and $v_\perp(\lambda) := |\{j : C_\lambda^{(j)}(x) = \text{abstain}\}|$ (with $v_A + v_B + v_\perp = K$) satisfy: $\lambda \mapsto v_A(\lambda)$ and $\lambda \mapsto v_B(\lambda)$ are non-increasing, while $\lambda \mapsto v_\perp(\lambda)$ is non-decreasing.
\end{lemma}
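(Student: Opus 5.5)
The argument is a direct case analysis on the sign and magnitude of the fixed score $f_j(x)$, using only the definition in Eq.~\ref{eq:decision}. Fix $x$, $j$, and $\lambda_1 \leq \lambda_2$, and write $s = f_j(x)$; this value does not depend on $\lambda$. By Eq.~\ref{eq:decision}, $C^{(j)}_\lambda(x) = A$ iff $s > \lambda$, $C^{(j)}_\lambda(x) = B$ iff $s < -\lambda$, and the expert abstains iff $-\lambda \leq s \leq \lambda$, i.e.\ iff $|s| \leq \lambda$.

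The three implications follow from this characterization.

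\emph{Case $A$.} If $C^{(j)}_{\lambda_1}(x) = A$, then $s > \lambda_1 \geq 0$. Since $\lambda_2 \geq 0$, we have $s > -\lambda_2$, so the output $B$ is impossible at $\lambda_2$. The output at $\lambda_2$ is therefore $A$ (if $s > \lambda_2$) or abstain (otherwise).

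\emph{Case $B$.} This is symmetric, using $s < -\lambda_1 \leq 0 \leq \lambda_2$.

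\emph{Case abstain.} If $|s| \leq \lambda_1$, then $|s| \leq \lambda_2$, so the expert abstains at $\lambda_2$ as well.

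The only subtlety is the sign-flip exclusion in the first two cases. It is exactly where the standing assumption $\lambda \geq 0$ is needed: with negative thresholds the two acceptance regions $\{s > \lambda\}$ and $\{s < -\lambda\}$ could overlap or swap. I will state this assumption explicitly, since it is the one point that is easy to gloss over.

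For the vote counts, the per-expert implications give set inclusions. Specifically,
\[
\{j : C^{(j)}_{\lambda_2}(x) = A\} \subseteq \{j : C^{(j)}_{\lambda_1}(x) = A\},
\]
because an expert voting $A$ at $\lambda_2$ cannot have voted $B$ or abstained at $\lambda_1$: by the implications above, either of those would force $B$ or abstain at $\lambda_2$. Hence $v_A(\lambda_2) \leq v_A(\lambda_1)$. The same argument gives $v_B(\lambda_2) \leq v_B(\lambda_1)$. The abstain implication gives the reverse inclusion for the abstaining experts, so $v_\perp(\lambda_1) \leq v_\perp(\lambda_2)$. Alternatively, $v_\perp$ being non-decreasing follows from $v_\perp = K - v_A - v_B$.

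Finally, I will note that the lemma is stated for a single shared $\lambda$, but the argument is per-expert. It therefore transfers verbatim to per-expert thresholds $\lambda_j(t) = t\,\lambda_j^{(0)}$, which are non-decreasing in $t$ when $\lambda_j^{(0)} \geq 0$. This is the form needed later for MC$^3$.
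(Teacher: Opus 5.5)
Your proof is correct and takes essentially the same route as the paper. Both characterize $C^{(j)}_\lambda(x)$ through the events $f_j(x) > \lambda$ and $f_j(x) < -\lambda$, note that these can only switch from true to false as $\lambda$ grows while abstention is absorbing, and read off the monotonicity of $v_A$, $v_B$, $v_\perp$ from the resulting per-expert transitions. Your explicit appeal to the standing assumption $\lambda \geq 0$ to rule out an $A \to B$ flip spells out a step that the paper's one-line argument leaves implicit.
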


\begin{proof}
By Eq.~\ref{eq:decision}, $C_{\lambda}^{(j)}(x) = A \iff f_j(x) > \lambda$ and $C_{\lambda}^{(j)}(x) = B \iff f_j(x) < -\lambda$; as $\lambda$ increases, both events can only turn from true to false and abstention is absorbing, giving the per-expert containment. Each expert transition is therefore one of $A \to A$, $A \to \text{abstain}$, $B \to B$, $B \to \text{abstain}$, or $\text{abstain} \to \text{abstain}$, so $v_A$ and $v_B$ can only decrease and $v_\perp$ can only increase.
\end{proof}

\begin{lemma}[No vote-flip]
\label{lem:noflip}
For any $\lambda_1 \leq \lambda_2$, the majority-vote decision $C_{\lambda}^{\mathrm{vote}}(x) = \mathrm{MajVote}(C_\lambda^{(1)}(x), \ldots, C_\lambda^{(K)}(x))$ satisfies the same containment as Lemma~\ref{lem:perexpert} (with $C^{(j)}$ replaced by $C^{\mathrm{vote}}$).
\end{lemma}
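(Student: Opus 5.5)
The plan is to reduce everything to the vote counts $v_A(\lambda)$ and $v_B(\lambda)$, which Lemma~\ref{lem:counts} already shows are non-increasing in $\lambda$. We write the majority-vote rule explicitly: $C_\lambda^{\mathrm{vote}}(x) = A$ iff $v_A(\lambda) \geq M$; $C_\lambda^{\mathrm{vote}}(x) = B$ iff $v_B(\lambda) \geq M$; and it abstains otherwise.

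First we check that this rule is well defined. Since $v_A + v_B \leq K$ and $K - M < M$, the conditions $v_A \geq M$ and $v_B \geq M$ cannot hold simultaneously, because together they would force $v_A + v_B \geq 2M > K$.

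Fix $\lambda_1 \leq \lambda_2$ and treat the three cases of the containment in turn.

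\emph{Case $C_{\lambda_1}^{\mathrm{vote}}(x) = A$.} Then $v_A(\lambda_1) \geq M$. By Lemma~\ref{lem:counts}, $v_B(\lambda_2) \leq v_B(\lambda_1) \leq K - v_A(\lambda_1) \leq K - M < M$. So a $B$ majority is impossible at $\lambda_2$, and the output is $A$ or abstain.

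\emph{Case $C_{\lambda_1}^{\mathrm{vote}}(x) = B$.} This is symmetric, with the roles of $A$ and $B$ swapped.

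\emph{Case $C_{\lambda_1}^{\mathrm{vote}}(x) = \text{abstain}$.} Then $v_A(\lambda_1) < M$ and $v_B(\lambda_1) < M$. Monotonicity of the counts gives $v_A(\lambda_2) < M$ and $v_B(\lambda_2) < M$, so the output at $\lambda_2$ is again abstain.

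The only real subtlety is the first case. We must rule out a direct flip from $A$ to $B$, and monotonicity of $v_B$ alone does not do this. What closes the gap is the bound $v_B(\lambda_1) \leq K - M$, which comes from the counts summing to at most $K$, combined with the strict-majority inequality $K - M < M$. The same argument applies verbatim when each expert uses its own threshold $\lambda_j(t) = t\lambda_j^{(0)}$ with $\lambda_j^{(0)} \geq 0$. In that setting each $\lambda_j(t)$ is non-decreasing in $t$, so Lemma~\ref{lem:perexpert} applied expert by expert still makes the counts monotone in $t$. This is the form that will be needed later for MC$^3$.
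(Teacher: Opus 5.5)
Your proof is correct and matches the paper's: the same three-case analysis on the vote counts, with the $A\to B$ flip ruled out via $v_B(\lambda_2) \leq v_B(\lambda_1) \leq K - M < M$ and the abstain case handled by monotonicity of the counts from Lemma~\ref{lem:counts}. Your well-definedness check and your remark about per-expert thresholds $t\lambda_j^{(0)}$ are sound; the latter is exactly how the paper invokes this lemma in the MC$^3$ validity proof.
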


\begin{proof}
We treat the three cases, using throughout the identity $v_A + v_B + v_\perp = K$ from Lemma~\ref{lem:counts}.

\textbf{Case 1:} $C^{\mathrm{vote}}_{\lambda_1}(x) = A$. Then $v_A(\lambda_1) \geq M$, so $v_B(\lambda_1) \leq K - v_A(\lambda_1) \leq K - M$. By Lemma~\ref{lem:counts}, $v_B(\lambda_2) \leq v_B(\lambda_1) \leq K - M < M$, so the vote at $\lambda_2$ cannot produce $B$. Therefore $C^{\mathrm{vote}}_{\lambda_2}(x) \in \{A, \text{abstain}\}$.

\textbf{Case 2:} $C^{\mathrm{vote}}_{\lambda_1}(x) = B$. By the symmetric argument, $C^{\mathrm{vote}}_{\lambda_2}(x) \in \{B, \text{abstain}\}$.

\textbf{Case 3:} $C^{\mathrm{vote}}_{\lambda_1}(x) = \text{abstain}$. Then $v_A(\lambda_1) < M$ and $v_B(\lambda_1) < M$. By Lemma~\ref{lem:counts}, $v_A(\lambda_2), v_B(\lambda_2) < M$. So $C^{\mathrm{vote}}_{\lambda_2}(x) = \text{abstain}$.
\end{proof}

\begin{lemma}[Loss monotonicity from no-flip]
\label{lem:lossmono}
Let $\lambda \mapsto C_\lambda(x)$ be any decision rule taking values in $\{A, B, \text{abstain}\}$ that satisfies the no-flip property of Lemma~\ref{lem:noflip} (equivalently, of Lemma~\ref{lem:perexpert}). Then for any $(X, Y)$, the loss $\lambda \mapsto \ell(C_\lambda(X), Y)$ is non-increasing in $\lambda$.
\end{lemma}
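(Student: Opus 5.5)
The plan is a direct case analysis on the value of the decision at the smaller threshold. We use only the no-flip containment and the definition $\ell(\hat{y}, y) = \mathbf{1}[\hat{y} \in \mathcal{Y} \wedge \hat{y} \neq y]$, under which abstention always incurs zero loss. Fix $(X, Y)$ and thresholds $\lambda_1 \leq \lambda_2$. Since $\ell$ takes values in $\{0,1\}$, it suffices to show that $\ell(C_{\lambda_2}(X), Y) = 1$ implies $\ell(C_{\lambda_1}(X), Y) = 1$. Equivalently, a committed wrong verdict at $\lambda_2$ must already have been the same committed wrong verdict at $\lambda_1$.

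The key step argues by the contrapositive of the no-flip property. Suppose $C_{\lambda_2}(X) = c \in \{A, B\}$ with $c \neq Y$. We consider the three possible values of $C_{\lambda_1}(X)$.
\begin{itemize}
\item If $C_{\lambda_1}(X)$ were abstain, the third containment would force $C_{\lambda_2}(X)$ to abstain, a contradiction.
\item If $C_{\lambda_1}(X)$ were the opposite label $c' \neq c$, the first or second containment would force $C_{\lambda_2}(X) \in \{c', \text{abstain}\}$, again a contradiction.
\end{itemize}
Hence $C_{\lambda_1}(X) = c \neq Y$, so $\ell(C_{\lambda_1}(X), Y) = 1 \geq \ell(C_{\lambda_2}(X), Y)$. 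In the remaining case, where $\ell(C_{\lambda_2}(X), Y) = 0$, the inequality $\ell(C_{\lambda_1}(X), Y) \geq 0$ is trivial.

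I will also note that the lemma applies uniformly to all three methods. Lemma~\ref{lem:perexpert} gives no-flip for the single-expert rule and for Score Averaging, applied to $f_{\mathrm{avg}}$. Lemma~\ref{lem:noflip} gives it for Decision Voting. For MC$^3$, the map $t \mapsto \lambda_j(t) = t\lambda_j^{(0)}$ is non-decreasing when $\lambda_j^{(0)} \geq 0$, so each per-expert decision in $t$ inherits the containment of Lemma~\ref{lem:perexpert}. The vote-count argument of Lemma~\ref{lem:noflip} then carries over verbatim with per-expert thresholds.

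There is no real obstacle. The only points needing care are that "abstain" (the set $\{A,B\}$) is scored as loss $0$ by the indicator convention, and, for the MC$^3$ extension, that $\lambda_j^{(0)} \geq 0$ so that the reparametrization preserves order.
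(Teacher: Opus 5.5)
Your proof is correct and is essentially the paper's argument. The paper shows forward that zero loss at $\lambda_1$ (i.e., $C_{\lambda_1}(X) \in \{Y, \text{abstain}\}$) persists at $\lambda_2$ by no-flip, which is the contrapositive of your step from a wrong commitment at $\lambda_2$ back to the same wrong commitment at $\lambda_1$; your remarks on applying the lemma to Score Averaging, Decision Voting, and MC$^3$ (using $\lambda_j^{(0)} \geq 0$) match how the paper uses it in its (P1) arguments.
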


\begin{proof}
Fix $\lambda_1 \leq \lambda_2$. If $\ell(C_{\lambda_1}(X), Y) = 0$, then $C_{\lambda_1}(X) \in \{Y, \text{abstain}\}$; by the no-flip property, $C_{\lambda_2}(X) \in \{C_{\lambda_1}(X), \text{abstain}\} \subseteq \{Y, \text{abstain}\}$, so $\ell(C_{\lambda_2}(X), Y) = 0$. If $\ell(C_{\lambda_1}(X), Y) = 1$, then $\ell(C_{\lambda_2}(X), Y) \in \{0, 1\} \leq 1$.
\end{proof}

\subsection{CRC Validity for Score Averaging}
\label{sec:appendix_avg}

For Score Averaging, the scoring function $f_{\mathrm{avg}}(x) = \tfrac{1}{K}\sum_{j=1}^{K} f_j(x)$ is a deterministic function of the data-independent $\{f_j\}_{j=1}^K$. Standard CRC is applied to $f_{\mathrm{avg}}$, giving $L_i(\lambda) = \ell(C_\lambda(X_i), Y_i)$ with $C_\lambda$ from Eq.~\ref{eq:decision}.

\textbf{(P1)} Lemma~\ref{lem:perexpert}, applied with $f_{\mathrm{avg}}$ in place of $f_j$, gives the no-flip property for $\lambda \mapsto C_\lambda(X_i)$; Lemma~\ref{lem:lossmono} then yields monotonicity of $L_i$ in $\lambda$.

\textbf{(P2)} At any fixed $\lambda$, the map $(X_i, Y_i) \mapsto L_i(\lambda)$ is a single deterministic function; Lemma~\ref{lem:exch} gives exchangeability.

Both conditions of Theorem~\ref{thm:crc} hold; the CRC guarantee \eqref{eq:appendix_target} applies.

\subsection{CRC Validity for Decision Voting}
\label{sec:appendix_dvote}

Decision Voting uses a single shared threshold $\lambda$ and aggregates via $C_\lambda^{\mathrm{vote}} = \mathrm{MajVote}(C_\lambda^{(1)}, \ldots, C_\lambda^{(K)})$, with $\{f_j\}$ data-independent. The loss is $L_i(\lambda) = \ell(C_\lambda^{\mathrm{vote}}(X_i), Y_i)$.

\textbf{(P1)} Lemma~\ref{lem:noflip} gives the no-flip property for $\lambda \mapsto C_\lambda^{\mathrm{vote}}(X_i)$; Lemma~\ref{lem:lossmono} yields monotonicity of $L_i$ in $\lambda$.

\textbf{(P2)} At any fixed $\lambda$, $C_\lambda^{\mathrm{vote}}$ is a deterministic function of $\{f_j\}_{j=1}^K$, so $(X_i, Y_i) \mapsto L_i(\lambda)$ is a single deterministic function; Lemma~\ref{lem:exch} gives exchangeability.

Both conditions of Theorem~\ref{thm:crc} hold; the CRC guarantee \eqref{eq:appendix_target} applies.

\subsection{CRC Validity for MC\texorpdfstring{$^3$}{\textthreesuperior}}
\label{sec:appendix_mc3}

MC$^3$ uses Phase~1 to compute per-expert ratios $\{\lambda_j^{(0)}\}_{j=1}^K \geq 0$ (Eq.~\ref{eq:per_expert_lambda}). Phase~2 introduces a global scalar $t \geq 0$ and defines $C_t(x) = \mathrm{MajVote}(C_{\lambda_1(t)}^{(1)}(x), \ldots, C_{\lambda_K(t)}^{(K)}(x))$ with $\lambda_j(t) = t \cdot \lambda_j^{(0)}$ (Eq.~\ref{eq:ratio_scale}). The loss is $L_i(t) = \ell(C_t(X_i), Y_i)$.

\begin{theorem}[CRC validity for MC$^3$]
\label{thm:crc-validity-mc3}
Under the exchangeability of $(X_i, Y_i)_{i=1}^{n+1}$, the MC$^3$ threshold $\hat{t}$ defined in Eq.~\ref{eq:t_hat} satisfies $\mathbb{E}[\ell(C_{\hat{t}}(X_{n+1}), Y_{n+1})] \leq \alpha$.
\end{theorem}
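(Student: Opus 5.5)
The plan is to reduce Theorem~\ref{thm:crc-validity-mc3} to Theorem~\ref{thm:crc} with $\Lambda = [0,\infty)$, threshold variable $t$, and random functions $L_i(t) = \ell(C_t(X_i), Y_i)$ for $i = 1, \ldots, n+1$. This mirrors the treatment of Score Averaging and Decision Voting in \S\ref{sec:appendix_avg}--\ref{sec:appendix_dvote}. The side conditions are handled as in \S\ref{sec:appendix_goal}:
\begin{itemize}
\item \emph{Boundedness.} Since $\ell \in \{0,1\}$, we have $B = 1$.
\item \emph{Right-continuity.} Each $L_i$ is a step function in $t$. Its jumps occur at $t = |f_j(X_i)|/\lambda_j^{(0)}$, and the strict inequalities in Eq.~\ref{eq:decision} make it right-continuous.
\item \emph{Vanishing at the top.} For $t > \max_j |f_j(X_i)|/\lambda_j^{(0)}$ every expert abstains, so $L_i = 0$. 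This assumes $\lambda_j^{(0)} > 0$. An expert with $\lambda_j^{(0)} = 0$ has a constant threshold $0$ and never abstains (off ties), so it never contributes an abstain vote. I will either assume $\lambda_j^{(0)}>0$ or floor it at a small $\varepsilon$.
\end{itemize}

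\textbf{(P1) Monotonicity in $t$.} For $t_1 \le t_2$ we have $\lambda_j(t_1) = t_1\lambda_j^{(0)} \le t_2 \lambda_j^{(0)} = \lambda_j(t_2)$ for every $j$, because $\lambda_j^{(0)} \ge 0$. Lemma~\ref{lem:perexpert} then gives the per-expert containment for each expert separately, at its own threshold. The vote-count monotonicity of Lemma~\ref{lem:counts} survives, since it only counts experts: $v_A$ and $v_B$ are non-increasing in $t$ and $v_\perp$ is non-decreasing. The case analysis of Lemma~\ref{lem:noflip} uses nothing beyond these counts and $K - M < M$, so it transfers verbatim and yields the no-flip property for $t \mapsto C_t(x)$. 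Lemma~\ref{lem:lossmono} then gives that $L_i(t)$ is non-increasing.

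\textbf{(P2) Exchangeability.} Once the map $g : (x,y) \mapsto \bigl(t \mapsto \ell(C_t(x), y)\bigr)$ is a fixed deterministic map, Lemma~\ref{lem:exch} applied to $Z_i = (X_i, Y_i)$ gives exchangeability of $\{L_i(\cdot)\}_{i=1}^{n+1}$ as random functions. Theorem~\ref{thm:crc} then yields $\mathbb{E}[L_{n+1}(\hat t)] \le \alpha$, which is the claim.

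\textbf{Main obstacle.} The map $g$ depends on $\lambda^{(0)} = (\lambda_1^{(0)}, \ldots, \lambda_K^{(0)})$, and Phase~1 computes $\lambda^{(0)}$ from the calibration points $1, \ldots, n$ but not from the test point. So $g$ is not a symmetric function of all $n+1$ points, and Lemma~\ref{lem:exch} does not apply directly. I would resolve this in one of two ways.
\begin{itemize}
\item \emph{Conditioning on a fixed ratio vector.} Make precise that the guarantee holds conditionally on $\lambda^{(0)}$, with $\lambda^{(0)}$ treated as a fixed, data-independent shape. The natural implementation is a sample split: estimate $\lambda^{(0)}$ on a disjoint initialization fold $\mathcal{D}_0$. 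Conditional on $\mathcal{D}_0$, the remaining $n+1$ points are still exchangeable, so (P2) holds conditionally, and the tower property gives the unconditional bound.
\item \emph{Symmetrizing Phase~1.} Argue that Phase~1 is a symmetric function of the full $(n+1)$-point bag, in the style of full conformal. This is not how Algorithm~\ref{alg:mc3} is written, so I would fall back on the split.
\end{itemize}
I expect this step to carry the real content of the proof; (P1) is a routine transfer of the voting lemmas.
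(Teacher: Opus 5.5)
Your reduction to Theorem~\ref{thm:crc} and your (P1) argument are the paper's. The paper notes that $t\mapsto t\lambda_j^{(0)}$ is non-decreasing, transfers Lemmas~\ref{lem:perexpert} and~\ref{lem:noflip} to per-expert thresholds, and concludes with Lemma~\ref{lem:lossmono}. Your remark that the no-flip case analysis uses only vote counts and $K-M<M$ is exactly what justifies that transfer. Your side conditions also match \S\ref{sec:appendix_goal}, including the assumption $\lambda_j^{(0)}>0$. One point in your favour: Theorem~\ref{thm:crc} needs the $L_i(\cdot)$ to be exchangeable as random functions, because $\hat t$ is random, and you state it that way. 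The paper only states exchangeability of $\{L_i(t)\}$ at each fixed $t$.

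The real difference is in (P2). There your objection is correct, and the paper does not answer it. The paper's proof says that Phase~1 ``fixes the shape'' and that, \emph{given these ratios}, $C_t$ is a deterministic function of $x$; it then applies Lemma~\ref{lem:exch}. That is your first bullet without the separate fold. But in Algorithm~\ref{alg:mc3}, $\lambda^{(0)}$ is computed from the same $\mathcal{D}_{\mathrm{cal}}$ used in Phase~2. Conditioning on a symmetric function of $(X_1,Y_1),\dots,(X_n,Y_n)$ keeps the calibration points exchangeable among themselves, but it singles out $(X_{n+1},Y_{n+1})$. The oracle threshold $\hat t'$ built from all $n+1$ losses then stops being a symmetric function of the data, so the CRC identity $\mathbb{E}[L_{n+1}(\hat t')]=\mathbb{E}\bigl[\tfrac{1}{n+1}\sum_{i} L_i(\hat t')\bigr]$ is lost. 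The paper gives no further argument for this step, so you are not missing a hidden idea.

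Your symmetrization route is not available as written either: a bag-symmetric Phase~1 would need the unobserved $Y_{n+1}$, or a full-conformal enumeration over $y\in\{A,B\}$, which changes the deployed rule. Your split argument is rigorous. It runs Phase~1 on an independent fold $\mathcal{D}_0$, uses conditional exchangeability of the remaining $n+1$ points given $\mathcal{D}_0$, and finishes with the tower property. It needs exchangeability of the enlarged sequence including $\mathcal{D}_0$, which is slightly more than the theorem assumes. However, it proves the guarantee for a modified MC$^3$, or for ratios fixed before seeing $\mathcal{D}_{\mathrm{cal}}$, not for Algorithm~\ref{alg:mc3} verbatim. That is also all the paper's own argument actually supports. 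Whether the exact bound holds when Phase~1 reuses $\mathcal{D}_{\mathrm{cal}}$ is established by neither argument.
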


\begin{proof}
\textbf{(P1)} Each $\lambda_j(t) = t \cdot \lambda_j^{(0)}$ is non-decreasing in $t$ (since $\lambda_j^{(0)} \geq 0$), so Lemma~\ref{lem:perexpert} applies to each expert's threshold trajectory. Lemma~\ref{lem:noflip} then gives the no-flip property for $t \mapsto C_t(X_i)$; Lemma~\ref{lem:lossmono} yields monotonicity of $L_i(t)$ in $t$.

\textbf{(P2)} Phase~1 produces a vector of per-expert ratios $(\lambda_1^{(0)}, \ldots, \lambda_K^{(0)})$ that fixes the shape of the decision function. At any fixed $t \geq 0$ and given these ratios, the joint decision function
\begin{equation*}
    C_t(x) = \mathrm{MajVote}\!\bigl(C_{t \lambda_1^{(0)}}^{(1)}(x), \ldots, C_{t \lambda_K^{(0)}}^{(K)}(x)\bigr)
\end{equation*}
is a deterministic function of $x$, so the map $(X_i, Y_i) \mapsto L_i(t) = \ell(C_t(X_i), Y_i)$ is a single deterministic function. Lemma~\ref{lem:exch} gives exchangeability of $\{L_i(t)\}_{i=1}^{n+1}$.

\textbf{Conclusion.} Both (P1) and (P2) hold; the boundedness, right-continuity, and $\lambda_{\max}$ conditions are addressed in \S\ref{sec:appendix_goal}. Applying Theorem~\ref{thm:crc} with $\hat{t}$ as in Eq.~\ref{eq:t_hat} yields $\mathbb{E}[L_{n+1}(\hat{t})] \leq \alpha$.
\end{proof}

\subsection{Remarks}
\label{sec:appendix_remarks}

\paragraph{Why MC$^3$ preserves exchangeability while \textit{Test-time Voting} does not.}
In MC$^3$, the per-expert thresholds $\lambda_j(t) = t \cdot \lambda_j^{(0)}$ define a single joint decision function $C_t$ that is \emph{used identically in calibration and at test}. By contrast, the \textit{Test-time Voting} baseline of \S\ref{sec:init} calibrates each expert independently at level $\alpha$ to obtain $\lambda_j^{(0)}$, and aggregates them through a majority vote only at test time. The mismatch between the calibration-time decision (per-expert independent) and the test-time decision (collective vote) breaks the symmetry required by (P2), and the CRC bound no longer applies.

\paragraph{Generality.}
The proofs above use only the structure of the per-expert decisions in Eq.~\ref{eq:decision} and the majority-vote aggregation. They do not depend on the specific choice of conformity score $f_j$ (any score satisfying the structure of Eq.~\ref{eq:decision} suffices), nor on the dimension or number of experts $K$, nor on the dataset.

\end{document}